\newtheorem{theorem}{Theorem}
\def\dproblem#1#2#3{
\begin{flushleft} 
  \noindent 
  {\sc #1}\\
  {\bf Instance: }#2.\\
  {\bf Question: }#3?\\
\end{flushleft}}
\def\fproblem#1#2#3{
\begin{flushleft} 
  \noindent 
  {\sc #1}\\
  {\bf Instance: }#2.\\
  {\bf Output: }#3?\\
\end{flushleft}}
\newcommand{\NwFlow}{{\sc Max Network Flow}}
\newcommand{\TNwFlow}{{\sc Threshold Network Flow with Reservoirs}}
\newcommand{\var}{\texttt}
\title{A spiking neural algorithm\\for the Network Flow problem}
\author{
  Abdullahi Ali \\
  School for Artificial Intelligence\\
  Radboud University\\
  Nijmegen, The Netherlands \\
  \texttt{a.ali@student.ru.nl} \\
   \And
 Johan Kwisthout \\
  Donders Institute for Brain, Cognition, and Behaviour\\
  Radboud University\\
  Nijmegen, The Netherlands \\
  \texttt{j.kwisthout@donders.ru.nl} \\
}
\begin{document}
\maketitle

\begin{abstract}
    It is currently not clear what the potential is of neuromorphic hardware beyond machine learning and neuroscience. In this project, a problem is investigated that is inherently difficult to fully implement in neuromorphic hardware by introducing a new machine model in which a conventional Turing machine and neuromorphic oracle work together to solve such types of problems. %A lattice of complexity classes is introduced, in which a neuromorphic oracle is consulted using only resources at most $R_T$ (for the construction of the network) and $R_S$ (for the execution of the network). 
We show that the $\mathsf{P}$-complete \NwFlow\ problem is intractable in models where the oracle may be consulted only once (`create-and-run' model) but becomes tractable using an interactive (`neuromorphic co-processor') model of computation. More in specific we show that a logspace-constrained Turing machine with access to an interactive neuromorphic oracle with linear space, time, and energy constraints can solve \NwFlow. A modified variant of this algorithm is implemented on the Intel Loihi chip; a neuromorphic manycore processor developed by Intel Labs. We show that by off-loading the search for augmenting paths to the neuromorphic processor we can get energy efficiency gains, while not sacrificing runtime resources. This result demonstrates how $\mathsf{P}$-complete problems can be mapped on neuromorphic architectures in a theoretically and potentially practically efficient manner.
\end{abstract}

\keywords{Neuromorphic computation \and Scientific programming \and Spiking neural networks \and Network Flow problem}

\section{Introduction}

Neuromorphic computing has been one of the proposed novel architectures to replace the von Neumann architecture that has dominated computing for the last 70 years \cite{mead1990neuromorphic}. These systems consist of low power, intrinsically parallel architectures of simple spiking processing units.  In recent years numerous neuromorphic hardware architectures have emerged with different architectural design choices \cite{akopyan2015truenorth,davies2018loihi,schemmel2012live,khan2008spinnaker}.

It is not exactly clear what the capabilities of these neuromorphic architectures are, but several properties of neuromorphic hardware and application areas have been identified in which neuromorphic solutions might yield efficiency gains compared to conventional hardware architectures such as CPUs and GPUs \cite{akopyan2015truenorth,davies2018loihi,khan2008spinnaker,schemmel2012live}. These applications are typically inherently event-based, easy to parallelize, are limited in terms of their energy budget and can be implemented on a sparse communication architecture where processors can communicate with small packets of information.

A potential major application area of these neuromorphic architectures is machine learning. This is motivated by the results deep neural networks have achieved in machine learning \cite{lecun2015deep}, where these loosely brain-inspired algorithms have dramatically redefined machine learning and pattern recognition. However, deep neural networks tend to consume a significant amount of energy. This energy bottleneck is one of the major reasons why these deep networks have not been successfully employed in embedded AI applications such as robotics. Neuromorphic processors, on the other hand, could potentially solve this bottleneck and fuel a new leap forward in brain-inspired computing solutions for AI.

There are several other areas that could greatly benefit from energy efficiency. One of these applications is numerical algorithms in scientific computing \cite{severa2016spiking}.  Traditionally, neural networks are trained by automatically modifying their connection weights until a satisfactory performance is achieved. Despite its success in machine learning, this approach is not suitable for scientific computing or similar areas since it may require many training iterations and does not produce precise results.

Alternatively, we can abandon learning methods and design the networks by hand. One way to do this is to carefully construct a network of (non-stochastic) spiking neurons to encode information in the spiking patterns or spike-timing.  One can, for example, introduce a synaptic delay to encode distance or introduce a spiking clock mechanism to encode values using the spike-time difference of a readout neuron and a clock.

 Efforts have been undergone on designing neural algorithms for primitive operations \cite{verzi2017optimization,severa2016spiking} and relatively straightforward computational problems \cite{maass2014noise, jonke2016solving}, but it is not clear how these methods can be scaled up to more complex algorithms. In this work, we build upon the algorithm design approach advocated by \cite{verzi2017optimization,severa2016spiking} and propose a systematic way to analyse and expand the potential application space of neuromorphic hardware beyond machine learning. In this light, we look at a problem of non-trivial complexity: the maximum flow problem.
 
 The input of the maximum flow problem consists of a weighted graph, where each weight denotes the capacity of a certain edge. We have two special nodes: a source and a sink. The source is the starting point of the graph and \textit{produces} flow and the sink is the terminal point of the graph and \textit{consumes} flow. The objective is to push as much flow as possible from source to sink while respecting the capacity constraints for each edge. The canonical method to solve this problem is the Ford-Fulkerson method \cite{ford1955simple}. In this method, one repeatedly searches for augmenting paths. These are simple paths from source to sink through which we can still push flow (i.e. no edge on the path has reached full capacity). If such a path is found, we determine the minimum capacity edge on this path and increment the flow through each edge in this path with this minimum capacity. This process is repeated until all augmenting paths have been found. In figure \ref{flownet} you can see an example of such a flow network.
 
 \begin{figure}[h]
     \centering
     \includegraphics[width=0.4\textwidth]{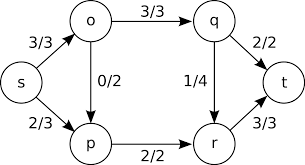}
     \caption{\textit{Example of a flow network with source 's' and sink 't'. Note that each has two numbers f/c associated to it. 'f' indicates the current flow through the edge and 'c' indicates the total capacity of the edge. }}
     \label{flownet}
 \end{figure}
 
 The maximum flow problem arises in many areas, such as logistics, computer networking, operations research and bioinformatics. With the huge increase in data in these application areas, the flow networks will similarly increase in size, demanding a need for faster and parallel algorithms.\\ Unfortunately, network flow algorithms are difficult to parallelize. The vast majority of network flow algorithms are implementations of the Ford-Fulkerson method. Augmenting paths have to be found from source to sink and flow has to be pushed through these paths, requiring fine-grained locking of the edges and nodes on the path which introduces expensive overhead.
 
 Several parallel implementations for finding the maximum flow in a network exist \cite{anderson1995parallel,bader2006cache,hong2011asynchronous}, but they often are carefully optimised against contemporary computer architecture or do not offer significant performance advantages over optimised serialised solutions. As a matter of fact, theoretical results show that Network flow is a $\mathsf{P}$-complete problem \cite{goldschlager1982maximum}. This means that it probably cannot be efficiently solved on a parallel computer\footnote{A decision problem $D$ belongs to the class $\mathsf{NC}$ (class of decision problems that can be decided in polylogarithmic time on a parallel computer) if there exist constants $c$ and $k$ such that $D$ can be decided in $log(n^c)$ time using only $k$ processors \cite{arora2009computational}.}. It is therefore likely that it cannot be fully implemented in neuromorphic hardware. Another corollary is that $\mathsf{P}$-complete problems also have the property that they likely cannot be solved with a guaranteed logarithmic space bound (henceforth denoted as logspace constraint). The class of polynomial-time solvable problems that respect the logspace constraint is called $\mathsf{L}$ and it is likely that no $\mathsf{P}$-complete problem is in this class.  
 
 An interesting direction of investigation is whether we can achieve this logspace constraint by utilizing aspects of both conventional and neuromorphic hardware. In this light, we introduce a new machine model in which a conventional computing device can consult a neuromorphic oracle to offload certain computations. Under this machine model, we introduce a lattice of complexity classes: $\mathsf{C}^{\mathcal{S}(R_T,R_S)}$, in which the conventional computing device can construct an oracle query (using resources $R_T = (\mathrm{TIME}, \mathrm{SPACE})$ and then consult a neuromorphic oracle (defined by resources $R_S = (\mathrm{TIME}, \mathrm{SPACE}, \mathrm{ENERGY})$. Importantly, in addition to the more traditional resources time and space we also take energy (defined as the number of spikes of the neuromorphic device) into account. We show that the $\mathsf{P}$-complete \textsc{Max Network Flow} problem is in $\mathsf{L}^{\mathcal{S}((\mathcal{O}(n^c),\mathcal{O}(\log n)),(\mathcal{O}(n),\mathcal{O}(n),\mathcal{O}(n)))}$ for graphs with $n$ edges. We can further refine the space limit of the neuromorphic device to $\mathcal{O}(l(s,t))$ where $l(s,t) \leq n-1$ denotes the maximum path length between source and sink.
 
 In addition to the formal analyses, we experimentally demonstrate that off-loading the search for augmenting paths to a neuromorphic processor (in this case the Intel Loihi processor) could potentially yield energy efficiency gains, while not sacrificing runtime complexity.
 
 The remainder of this paper is organised as follows: In section \ref{prelim} we will give an introduction to neuromorphic computing in general, and neuromorphic complexity analysis and neural algorithm design in particular. This will provide a grounding for the reader to understand the subsequent sections.
 
 Following this introduction, we will formalize our machine and neural model in section \ref{model} and give a formal definition of the \NwFlow\ problem in section \ref{formal_def}. We will show that under this model, the \NwFlow\ problem is intractable to be realized in neuromorphic hardware alone, but benefits from an interaction between a traditional processor and a neuromorphic co-processor.
 
 In section \ref{algorithm} we will describe our algorithm and follow up with a complexity analysis of this algorithm in section \ref{complexity}.\\ In section \ref{methods} we will give a description of the methodology for our empirical validation of the theoretical analyses and provide the results from our experiments. In section \ref{discuss} we discuss these results and evaluate the proposed pipeline and in section \ref{concl} we will end with some concluding remarks.
 
 The goal of this project is threefold, (1) to use the tools and methods from computational complexity theory to come up with a systematic way of evaluating the feasibility of implementing computational problems in neuromorphic hardware, (2) to demonstrate that we can potentially expand the application space of neuromorphic hardware by suggesting an alternative model of computation, (3) inform neuromorphic hardware designers about potential architectural design limitations in expanding the application space to the class of problems under scrutiny in this project. By satisfying these goals, we hope to demonstrate the potential of a top-down analytical evaluation pipeline in demystifying the application space of neuromorphic hardware.

\section{Preliminaries}\label{prelim}

\subsection{Neuromorphic Hardware}
Neuromorphic computing has been one of the proposed novel architectures to replace the von Neumann architecture that has dominated computing for the last 70 years \cite{mead1990neuromorphic}. These systems consist of low power, intrinsically parallel architectures of simple spiking processing units.

In recent years numerous neuromorphic hardware architectures have emerged with different architectural design choices \cite{akopyan2015truenorth,davies2018loihi,schemmel2012live,khan2008spinnaker}. These distinctions include digital vs. mixed-signal approaches, the underlying neural model (simple neurons vs. more complex neuronal models), the scale of the systems in terms of the size of the networks that can be simulated, the number of features they offer,  and whether the operation speed is accelerated or real-time. These design decisions are mostly motivated by the type of applications the original designer had in mind. For example, the BrainScaleS system \cite{schemmel2012live} is an accelerated mixed-signal system that operates at 10.000x biological real-time. This architecture is appropriate for simulating realistic biological processes over multiple time-scales (from the millisecond scale to years).

The SpiNNaker system, on the other hand,  \cite{khan2008spinnaker} has a digital architecture and runs at biological real-time. Due to its scalable digital architecture, it can run very large neural simulations and its real-time operation opens up opportunities for robotics applications.

A different approach is to focus on flexibility. An example of such an architecture is the Loihi chip \cite{davies2018loihi}, a neuromorphic chip developed by Intel Labs. The Loihi chip has a digital architecture inspired by a simple computational model of neural information processing: a network of leaky integrate-and-fire (LIF) neurons. In addition to that, it offers a wide array of features that enable users to build more complex neuronal models.

In this project, we aim to complement these more bottom-up approaches by a strictly top-down approach, in which we analyse the resource demands of a computational problem in terms of energy, time and space. We abstract away from specific hardware architectures and use a model of networks of LIF neurons to describe our computations on neuromorphic hardware. We will elaborate on this in the remaining parts of this section.

\subsection{Neuromorphic Complexity Analysis}

In traditional computing,  computational complexity theory gives an indication of the resources needed to solve a computational problem in terms of their input size under the traditional Turing machine model \cite{turing1937computable}. Through computational complexity theory, we are able to define classes of problems that require at least a certain amount of time and space resources, including methods and tools to analytically assign a specific computational problem to a certain class. This not only gives us a fundamental understanding of what types of problems can and cannot be efficiently solved but also provides us with an analytical approach to determine whether a new computational problem is efficiently solvable or not.

In contrast to traditional computing, we do not have a strong understanding of the types of problems that can and cannot be solved with neuromorphic hardware. The methods and tools from computational complexity could potentially be very useful in understanding the application space of neuromorphic processors, but traditional complexity theory might not be the ideal way to analyse the resource constraints of neuromorphic systems. The resources analysed in computational complexity - time and space - are coarse and derived from an abstract model of computation. More significantly, it does not capture the resource arguably of most interest when moving towards neuromorphic solutions: energy expenditure.

Efforts are underway in designing a neuromorphic complexity theory that is more equipped to describe the resource demands of neuromorphic processors \cite{kwisthout2018neuromorphic,DK19}. In section \ref{model}, we will build upon this work and formalise an alternative machine model in which a traditional Turing machine communicates with a neuromorphic oracle. In sections \ref{algorithm} and \ref{complexity} we will demonstrate how the maximum network flow problem can be mapped on this model and how this machine model can capture energy expenditure.

\subsection{Neural Algorithm Design}
In spiking neural networks, the weights can be trained (e.g. through spike-time dependent plasticity) or programmed. In the latter case, two approaches currently exist. One approach is to design a network of stochastic spiking neurons in such a way that it corresponds to an instance of a particular optimization problem, e.g.the travelling salesperson problem (TSP). We can achieve this by constructing basic circuits such as winner-take-all circuits or logic circuits. These circuits constrain the spiking behaviour of the network in such a way that it creates an energy landscape (distribution of spike-based network states) that leads to a fast convergence to the optimal solution \cite{jonke2016solving}.\\
Another approach is to carefully handcraft the network on the neuron level to obtain desirable signals and computation in the entire networks, which has proven to be successful for basic computational operations \cite{severa2016spiking,verzi2017optimization}. Notably \cite{severa2016spiking} introduced a simple discrete spiking neural model that is able to capture many interesting computational primitives, such as delays, spike-timing and leakages. We will use this model to implement our neuromorphic oracle.  We will describe this model in detail in section \ref{model}.

There is currently no straightforward way to scale these approaches up to more complex compounded problems such as the maximum flow problem. In sections \ref{model}, \ref{algorithm} \& \ref{complexity} we expand on previous neural algorithm design work and demonstrate a novel way of tackling more complex problems such as the maximum flow problem.

\section{Model \& Problem Definition}\label{model}

\subsection{Machine Model}
We introduce a new machine model consisting of two components: (1) a Turing machine $M$ with a read-only input tape and a working memory, (2) a neuromorphic oracle $O$, a formal depiction of a computing device that receives a spiking neural network (SNN) definition from $M$, can simulate this SNN, and outputs information in the form of specific spiking events. $O$ can be either a transducer (computing a function and writing the result on the output tape) or a decider (deciding a decision problem and writing `0' or `1' on the output tape). When the computation halts after a single oracle call we refer to the model as a {\em pre-processing} model; if the Turing machine can consult the oracle multiple times and use the oracle's output in its subsequent computations, we refer to this model as an {\em interactive} model.

Let $L$ be a language of yes-instances of problem $\Pi$ and $i$ be a specific instance of $\Pi$. Then the Turing machine $M$ implements an algorithm $A_L(i)$ that decides whether $i \in L$. In addition to its normal behaviour, given input $i$, $M$ can construct an encoding of any spiking network $\mathcal{S}_{L,i}$ that is subsequently communicated to and processed by $O$. Both $A_L(i)$ and $\mathcal{S}_{L,i}$ work under constrained resources $R_T$ and $R_S$ relative to $|i|$, where $R_T$ is a two-tuple $(\mathrm{TIME}, \mathrm{SPACE})$ and $R_S$ is a three-tuple $(\mathrm{TIME}, \mathrm{SPACE}, \mathrm{ENERGY})$. This way we can prevent the construction of $\mathcal{S}_{L,i}$ from being trivial. In order to respect resource constraints $R_T$ where $\mathrm{SPACE} < \mathrm{TIME}$  we introduce a working memory tape from which $A_L$ can read and write. This working memory will have size $R_T[\mathrm{SPACE}]$. In figure \ref{fig:model} the reader can find an illustration of this model.

\begin{figure}[h]
    \centering
    \includegraphics[width=0.6\textwidth]{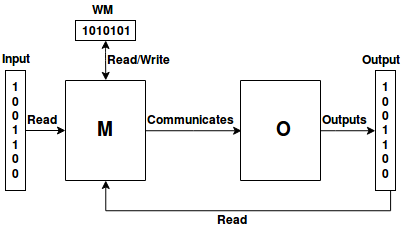}
    \caption{\textit{Illustration of the machine model. We have a conventional device $M$ which acts as a Turing machine with one input tape and a neuromorphic oracle $O$ to which $M$ can communicate a spiking neural network definition $\mathcal{S}_{L,i}$. If we have a specific language $L$ and input $i$ with imposed resource constraints $\{R_T, R_S\}$,  $M$ implements $A_L(i)$, can construct a spiking network $\mathcal{S}_{L,i}$, and can communicate this network to $O$. $O$ subsequently processes this network and outputs designated spiking events in the order of their spike timings (for a transducer model) or a single `0' or `1' (for a decider model). In order to constrain the space requirements of $M$ we only allow read access from the input tape of $M$ and we introduce a Working Memory (WM) with size $R_A[\textrm{SPACE}]$, where $R_A[\textrm{SPACE}]$ is the space constraint of $A_L$. In the pre-processing case the `Read' arrow from the oracle back to the Turing machine does not exist and the computation halts with the oracle writing the output on the tape.}}
    \label{fig:model}
\end{figure}

\subsection{Neural Model}
For the realisation of the oracle, we adopt the neural model in \cite{severa2016spiking}, in which a neuron $H_i$ is defined as a 3-tuple:
$$H_i = (T_i, R_i, m_i)$$
Where $T_i, R_i, m_i$ are the firing threshold, reset voltage and multiplicative leakage constant respectively.

A synapse is defined as a 4-tuple:
$$S_{a,b} = (d, w)$$
Where $a$ is the pre-synaptic neuron, $b$ is the post-synaptic neuron and $d$ and $w$ are the synaptic delay and synaptic weight respectively.

The spiking behaviour is determined by a discrete-time difference equation of the voltage. Suppose neuron $y$ has voltage $V_{ty}$ at time step $t$. Then we can compute the voltage at time step $t + 1$ in the following way:
$$V_{t+1y} = m_yV_{ty} + \sum_{S_{xy} exists}w_{xy}x_{t+1 - d_{xy}}$$
Where $x_{t+1 - d_{xy}} = 1$ if neuron $x$ spiked at time step $t + 1 - d_{xy}$ and $x_{t+1 - d_{xy}} = 0$ otherwise. A spike $x_t$ is abstracted here to be a singular discrete event, that is, $x_t = 1$ if a spike is released by neuron $x$ at time $t$ and $x_t = 0$ otherwise.

Additionally, we have  voltage $V_0$ that denotes the initial potential of a neuron. We assume $V_0 = 0$, unless explicitly mentioned otherwise. Furthermore, we assume that the membrane potential is non-negative.

We make a distinction between four types of neurons. A \textit{standard neuron} for internal computations a \textit{readout neuron}, from which the spike events will be written on the output tape of the neuromorphic oracle $O$ a scheduled neuron, a programatically defined neuron that fulfills a certain specific role (e.g.  scheduled firing or constantly firing) and an input neuron that represents the input in case the problem cannot be fully encoded in the neurons and synapses and needs external information to drive computation. In figure \ref{neurontypes} you can find an illustration of these neuron types. Note that in this project we only make use of the readout and standard neurons.

\begin{figure}[h]
    \centering
    \includegraphics[width=0.6\textwidth]{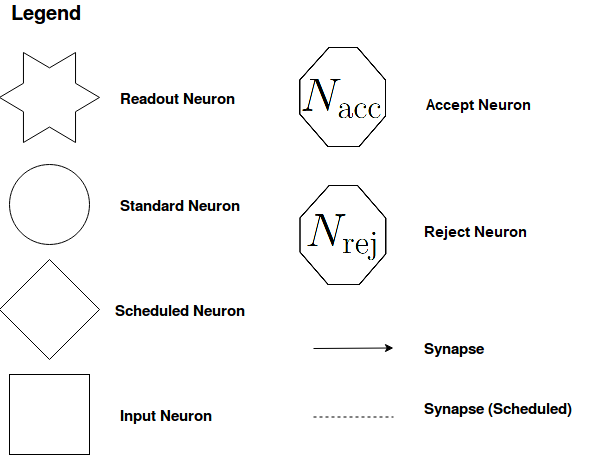}
    \caption{\textit{Illustration of the different neuron types. The standard neuron is used for internal computations in the network, while the readout neuron can submit their spiking events on the output tape of the oracle. The Accept and Reject neurons are specific to decider oracles and fire when the input accepts resp. rejects. The scheduled neuron and input neurons are auxiliary neurons that can be used to represent an external drive and to drive standard neurons with a bias current.}}
    \label{neurontypes}
\end{figure}

\subsection{Complexity classes}

In \cite{DK19} we introduced a hierarchy of complexity classes $\mathcal{S}(R_T,R_S)$ (for pre-processing neuromorphic oracles) and $\mathcal{M}(R_T')^{\mathcal{S}(R_T,R_S)}$ (for interactive neuromorphic oracles, where the `base' Turing machine is characterized by resources $R_T'$. In the context of this paper we are mostly interested in $R_T' = ( \mathcal{O}(n^c), \mathcal{O}(\log n))$; hence we refer to these classes as $\mathsf{L}^{\mathcal{S}(R_T,R_S)}$ using the familiar class of logspace problems $\mathsf{L}$.

\section{Network Flow on neuromorphic systems}
\label{formal_def}

Under the models defined above we define the following problem definitions for finding the maximum flow in a network:\\

\fproblem{\NwFlow\ (functional version)}
{A directed graph $G = (V,E)$, with designated vertices $s,t \in V$ referred to as the sink (no outgoing arcs) and source (no incoming arcs) of the network, respectively; for each edge $e \in E$ we have a non-negative integer $c(e)$, the capacity of that edge.}
{A flow assignment $f(e)$ for each edge $e \in E$ such that $0 \leq f(e) \leq c(e)$ and $\sum_e f(e)$ is maximised.}

\fproblem{\NwFlow\ (decision version)}
{As in the functional version; in addition; and integer $d$.}
{Is there a flow assignment $f(e)$ for each edge $e \in E$ such that $0 \leq f(e) \leq c(e)$ and $\sum_e f(e) < d$?}

We first illustrate that, using a naive approach, the decision version of \NwFlow\ is decidable on a pre-processing neuromorphic oracle, yet at the cost of exponential resources. In the subsequent sections we will show that we cannot decide this problem energy-tractably on such oracles, but provide a tractable algorithm for an interactive neuromorphic algorithm.

\subsection{A naive neuromorphic solution}

\begin{theorem}
Let $(G, d)$ be an arbitrary instance of \NwFlow\ with $n$ vertices and $m$ edges. Let $f_{max} = \max_{(u,v)} c(u,v)$ be the maximal flow possible between any two vertices in $G$; without loss of generality we may assume that $d \leq f_{max}$. $(G, d)$ is decidable in time $\mathcal{S}(\mathsf{DTIME}(\mathcal{O}(n\times(f_{max+1})^m))),(\mathcal{O}(f_{max}), \mathcal{O}(n\times(f_{max+1})^m), \mathcal{O}(n\times(f_{max+1})^m))$.
\end{theorem}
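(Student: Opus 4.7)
The plan is a direct brute-force construction: since every flow assignment is a vector $(f(e))_{e \in E} \in \{0,1,\ldots,f_{max}\}^m$, there are exactly $(f_{max}+1)^m$ candidates to consider, and I will build one spiking verifier subnetwork per candidate, run them all in parallel on the oracle, and combine their verdicts via a single readout OR-neuron. The readout spikes iff some candidate is simultaneously feasible (capacity-respecting and conservative at every internal vertex) and attains source-outflow at least $d$.

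On the Turing-machine side, $M$ iterates lexicographically over all $m$-tuples (maintaining only an $O(m \log f_{max})$-bit counter in working memory at a time) and, for each tuple, emits onto the oracle tape an $\mathcal{O}(n)$-sized verifier subnetwork together with its synapses to the global readout. Writing out all $(f_{max}+1)^m$ subnetworks sequentially takes $\mathcal{O}(n \cdot (f_{max}+1)^m)$ time, matching the claimed $\mathsf{DTIME}$ bound; the identically-sized SPACE bound on the oracle side is just the total neuron count across all verifiers.

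Each verifier for an assignment $(f_1,\ldots,f_m)$ contains one conservation-check neuron $C_v$ per internal vertex $v$, a value neuron $V$ tied to the source, and a candidate-accept neuron $A_k$. A scheduled neuron spikes once at $t=0$; from it I install a synapse of weight $+f_j$ to $C_v$ for each $e_j$ entering $v$ and weight $-f_j$ for each $e_j$ leaving $v$, so the integrated net input to $C_v$ equals (inflow $-$ outflow). Because the model forbids negative voltages, I split the signed sum across two complementary neurons (one for positive excess, one for negative excess) and combine them with an inhibitory synapse at the next layer, producing a single "invalid" signal iff the balance at $v$ is non-zero. The value neuron $V$ accumulates the source's outflow via unit-weight synapses and feeds a comparator that fires "big enough" once its count reaches $d$; this takes at most $f_{max}$ steps to settle, which drives the $\mathcal{O}(f_{max})$ TIME bound. $A_k$ fires precisely when no conservation check produced "invalid" and the comparator produced "big enough," and all $A_k$ synapse with weight $+1$ into the threshold-$1$ readout.

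For the resource count, there are $(f_{max}+1)^m$ verifier copies of $\mathcal{O}(n)$ neurons each, giving SPACE $\mathcal{O}(n \cdot (f_{max}+1)^m)$, and every neuron in the construction can be arranged to fire at most a constant number of times (the scheduled driver fires once, each $C_v$-pair at most once, $V$ at most once, $A_k$ at most once, and the readout at most once), yielding the matching ENERGY bound. The main obstacle I expect is exactly the non-negativity constraint on membrane potential: a straightforward signed-sum conservation neuron would need to represent possibly negative balances. I plan to dispose of this by the split-neuron trick sketched above, but verifying that the split gadget correctly distinguishes "balanced" from "unbalanced" within the $\mathcal{O}(f_{max})$ time window and without blowing up the energy beyond $\mathcal{O}(1)$ per gadget is the step that needs the most care.
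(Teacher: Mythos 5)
Your proposal is correct and matches the paper's global strategy: enumerate all $(f_{max}+1)^m$ candidate flow vectors, emit one $\mathcal{O}(n)$-neuron verifier sub-network per candidate, run them in parallel on the oracle, and combine the verdicts, with the exponential cost landing exactly in the claimed TM-time, oracle-space, and oracle-energy terms. The differences are at the gadget level, and they are worth spelling out. First, the paper verifies conservation by a \emph{spike-timing race}: for each node it builds neurons $f^n_{in}$ and $f^n_{out}$ whose firing times encode the flow-in and flow-out values, and a comparison circuit in which $E_n$ fires at time $\min(\mathrm{in},\mathrm{out})+2$ if and only if the two values differ; this temporal coding is precisely what produces the $\mathcal{O}(f_{max})$ TIME term, and since it only ever compares non-negative accumulations it never forms a signed sum, so the negative-voltage obstacle you flag does not arise there at all. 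Your amplitude-coded variant is also sound: under the non-negativity clamp, a threshold-$1$ neuron driven once with net weight $(\mathrm{in}-\mathrm{out})$ fires iff $\mathrm{in}>\mathrm{out}$, its mirror fires iff $\mathrm{out}>\mathrm{in}$, and their disjunction is your ``invalid'' signal; each such gadget settles in $\mathcal{O}(1)$ time with $\mathcal{O}(1)$ spikes, so the step you single out as delicate does go through, and your verifiers are in fact faster than the paper's (your only $\mathcal{O}(f_{max})$-time component, the unit-increment value comparator, could likewise be replaced by a single weighted synapse into a threshold-$d$ neuron). Second, the paper filters the value constraint on the Turing-machine side, emitting sub-networks only for assignments whose source outflow exceeds $d$, so it needs no in-network comparator, whereas you check the value inside each verifier; both fit the budget. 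Third, your threshold-$1$ OR readout over the candidate-accept neurons $A_k$ is logically cleaner than the paper's combination scheme, in which every violation neuron $O_{\mathbf{f}}$ excites $N_{\mathrm{rej}}$, which then persistently inhibits a scheduled $N_{\mathrm{acc}}$: as literally written that wiring lets $N_{\mathrm{acc}}$ fire only when \emph{no} candidate is violated, and it is correct only if $N_{\mathrm{rej}}$'s threshold is set to the number of candidate sub-networks; your explicit existential OR avoids this pitfall. The one loose end on your side is the realization of ``$A_k$ fires iff no invalid signal and the comparator fired,'' which requires an explicit inhibition construction (e.g., the comparator drives $A_k$ at a scheduled later step and any invalid neuron strongly inhibits it); this is routine given the model's inhibitory synapses and scheduled neurons, but you should state it.
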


\begin{proof} 
We construct $\mathcal{S}$ from $(G, d)$ as follows. Let $\mathbf{f}$ be a joint assignment $f(u,v) \leq c(u,v)$ to every arc $(u,v)$. For every possible $\mathbf{f}$ such that $\sum_{u:(s,u)\in A} f(s,u) > d$ we construct a sub-network $\mathcal{S}_{\mathbf{f}}$; every network $\mathcal{S}_{\mathbf{f}}$ consists of $|V|$ sub-sub-networks $\mathcal{S}_{\mathbf{f},n}$ that tests whether $\sum_{u:(u,n)\in A} f(u,n) = \sum_{u:(n,u)\in A} f(n,u)$, i.e., that tests whether flow-in = flow-out for each node. The network structure for each $\mathcal{S}_{\mathbf{f},n}$ is given in Figure \ref{fin_fout}; basically, this network includes a neuron $f^n_{in}$ representing the flow-in of $n$, a neuron $f^n_{out}$ representing the flow-out of $n$, and a comparison circuit that realises that neuron $E_n$ fires at time ($\min$($\sum_{u:(u,n)\in A} f(u,n)$, $\sum_{u:(n,u)\in A} f(n,u)$)$ + 2)$ if and only if flow-in $\ne$ flow-out in node $n$. Each neuron $E_n$ is connected to a neuron $O_{\mathbf{f}} = (1,0,0)$ by a synapse $(0,1)$; $O_{\mathbf{f}}$ will fire at time at most $f_{max} + 3$ if any flow conservation constraint in $\mathbf{f}$ is violated. In $\mathcal{S}$, every neuron $O_{\mathbf{f}}$ is connected to $N_{\mathrm{rej}}$ by a synapse $(0,1)$; $N_{\mathrm{rej}}$ is wired such that it will keep firing once triggered (by a self-loop) and will keep inhibiting $N_{\mathrm{acc}}$, finally, $N_{\mathrm{acc}}$ will be scheduled to fire at time $f_{max} + 5$ unless inhibited by $N_{\mathrm{rej}}$. We conclude that $N_{\mathrm{acc}}$ fires if and only if $(G, d)$ is a yes-instance of \NwFlow, yet that the number of spikes, number of neurons, and the firing time are exponential in the size of $(G, d)$.
\end{proof}

\begin{figure}[h!]
\centering
\includegraphics[width=8cm]{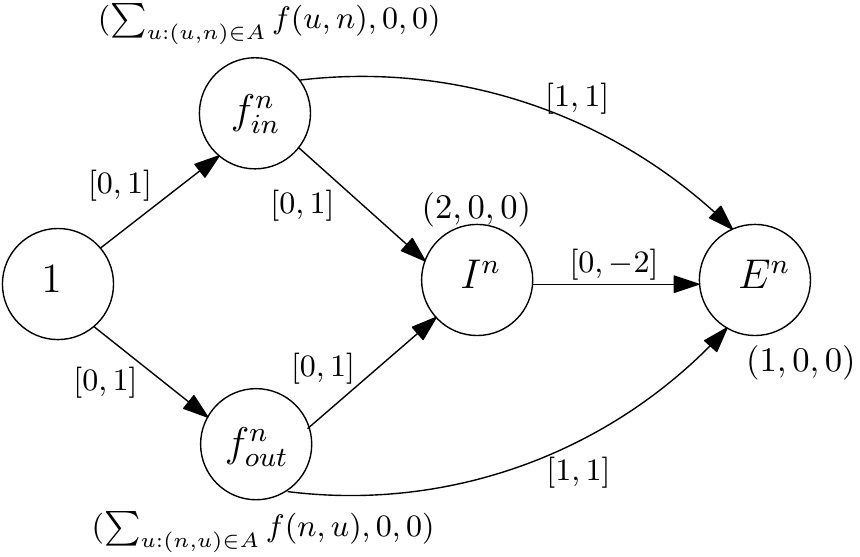}
\caption{The sub-sub-network $\mathcal{S}_{\mathbf{f},n}$ that tests whether $\sum_{u:(u,n)\in A} f(u,n) = \sum_{u:(n,u)\in A} f(n,u)$ for a given flow $\mathbf{f}$.}
\label{fin_fout}
\rule{\columnwidth}{0.3mm}
\vspace{1mm}
\end{figure}

\subsection{Intractability proof of a related problem}

In the remainder of this section we will give a lower bound for a more general variant of \NwFlow, namely the following problem:

\dproblem
{\TNwFlow}
{A directed graph $G = (V,A)$ with designated vertices $s, t \in V$, referred to as the sink (no incoming arcs) and source (no outgoing arcs) of the network, respectively; a capacity interval $c(u,v) = [c_{\mathrm{min}},c_{\mathrm{max}}]$ denoting respectively the minimum flow threshold $c_{\mathrm{min}}$ and capacity $c_{\mathrm{max}}$ of an arc $(u,v)$; dedicated auxiliary sinks $r \in R \subset V$ and sources $p \in P \subset V$ (together called reservoirs). Let the {\em flow} $f(u,v)$ be subject to 1) for all $(u,v) \in A$ either $f(u,v) = 0$ or $c_{\mathrm{min}}(u,v) \leq f(u,v) \leq c_{\mathrm{max}}(u,v)$ and 2) $\sum_{u:(u,v)\in A} f(u,v) = \sum_{u:(v,u)\in A} f(v,u)$ for all $v \in V \setminus (\{s, t\} \cup R \cup P)$. Let $d$ be a non-negative integer}
{Is $\sum_{u:(s,u)\in A} f(s,u) > d$}

Note that \NwFlow\ is a constrained version of this problem where for all arcs $c_{\mathrm{min}} = 0$ and $P, R = \varnothing$. We show that every spiking neural network $\mathcal{S}$ with $n$ neurons, time constraint $t$, and energy constraint $e \leq nt$ can be reduced using a linear reduction to an instance of \TNwFlow\ with $\mathcal{O}(nt)$ nodes. In particular, for {\em constant} time bounded network simulations (i.e., $t = \mathcal{O}(1)$), this implies that any $n$-node network simulation that runs in constant time (yet has no energy constraints) can be reduced to solving a \TNwFlow-instance with $n$ nodes. This implies that if \TNwFlow\ could be decided on a constant-time bounded neuromorphic device taking asymptotically {\em less} than maximum energy (e.g., $\sqrt{n}$ spikes per unit of time) then basically {\em every} constant-time bounded neuromorphic computation can be made more energy efficient (to $O(\sqrt{n})$ spikes per unit of time in this case) at the cost of only a linear amount of additional network size. In other words, this implies hardness of \TNwFlow\ for $\mathcal{S}((\mathcal{O}(n),\mathcal{O}(n)),( \mathcal{O}(1),\mathcal{O}(n),\mathcal{O}(n)))$. The full proof of this claim is elaborate and is delegated to the appendix.

In the next section, we show that we can solve \textsc{Max Network Flow} under this model with $R_A = (\mathcal{O}(n^3),\mathcal{O}(1))$ and $R_S = (\mathcal{O}(n),\mathcal{O}(n),\mathcal{O}(n))$, where $n$ is the number of edges in the flow network.

\section{Algorithm}\label{algorithm}
We adopt a variant of the Ford-Fulkerson Method: the Edmonds-Karp Algorithm \cite{edmonds1972theoretical}. This algorithm repeatedly finds shortest augmenting paths, pushes flow through these paths based on the edge with the minimum capacity until all augmenting paths have been exhausted. There are three components in this algorithm that violate the logspace constraint.
\begin{enumerate}
    \item The queue maintained by the search algorithm
    \item Maintenance of the path
    \item Maintenance of the flow through each edge
\end{enumerate}
In all three cases, the memory demands can be linear in the input in the worst case. We introduce one spiking operation and two neuromorphic data structures, through which we offload these components to the neuromorphic oracle. A modification of the wave propagation algorithm discussed in \cite{ponulak2013rapid} and two networks that maintain the augmenting path and the flow of the edges.  Algorithm \ref{EK} describes the algorithm in full. The algorithm is defined on the Turing machine $M$. $M$ can consult a neuromorphic oracle $O$ during execution.\\

\begin{algorithm}[h]
\KwData{weighted graph G = (V,E) with a capacity c for each edge}
    $Write\_Capacity(E)$\;\label{cap}
    \While{there is an augmenting path}{
        $Spike\_Search(E)$\;\label{search}
        $\var{Min\_Cap}$  = $\infty$\;
        $\var{Prev\_Neuron} = Null$\;
        \While {not $End(O)$} {\label{whilePathBegin}
            $\var{Neuron}$ = $Read(O)$\;
            \If {$Continuation(\var{Neuron}, \var{Prev\_Neuron)}$}{
                 \If{ $\var{Min\_Cap}$ $>$ $Cap(\var{Neuron})$}{
                    $\var{Min\_Cap}$ = $Cap(\var{Neuron})$\;}
                $Write\_Path(\var{Neuron})$\;
                $\var{Prev\_Neuron}$ = $\var{Neuron}$\;}}\label{whilePathEnd}
 
        \While{not $End(N)$}{\label{updateFlowBegin}
            $\var{Neuron}$ = $Read\_Path(O)$\;
            $Write\_Voltage(\var{Neuron} , \var{Min\_Cap})$\;
            $Write\_Capacity(\var{Neuron}$)\;}\label{updateFLowEnd}
    }
    $\var{Max\_Flow} = 0$\;\label{maxflowbegin}
    \While {not $End(O)$} {
        $\var{Neuron}$ = $Read(O)$\;
        $\var{Max\_Flow}$ += $Voltage(\var{Neuron})$\;
        }\label{maxflowend}
    return $\var{Max\_Flow}$ \label{return}
    \caption{Spiking E-K algorithm\label{EK}}
\end{algorithm}
Algorithm \ref{EK} gives a full description of a spiking version of the E-K algorithm. In line \ref{cap} we write away a readout capacity neuron for each edge on the neuromorphic oracle $O$, which keeps track of the flow that has gone through the neuron. The capacity neurons are defined as:
$$C_i  =  (c + (|E| + 1), 0, 1)$$
Where $C_i$\footnote{In this instance (and the remaining part of this section), we can set $V_0 = |E| + 1$ without loss of generality.} is the capacity neuron for edge $i$ in the flow network, $c$ is the capacity of the edge that the neuron codes for.  When $C_i$ reaches its firing threshold, it will fire exactly once and instantaneously inhibit its postsynaptic neurons.

In line \ref{search} we map our flow network onto two spiking networks in which each neuron codes for an edge in the flow network. We use the first network to find an augmenting path in the flow network and we use the second network to sort the edges in order, such that we can read out the path.  We define the neurons in the first network as:
$$H_i = (1 + (|E| + 1), 0, 1)$$
The reset of the neuron is set such that this neuron can only spike exactly once. The timing of the spike will then be proportional to the length of the shortest path from the source to the edge that the neuron codes for.

The connectivity of this network is defined as follows: let $a \rightarrow b$ and $c \rightarrow d$ be two edges in the original flow network with vertices $a, b, c$ and $d$. If $b = c$, we define a synaptic connection:
$$S_{H_{c\rightarrow d}, H_{a\rightarrow b}} = (1,1)$$
This means that the direction of flow in the spiking network is reversed w.r.t. the original flow network.  We then read out and connect the earlier defined capacity neurons according to:
$$S_{C_i,H_i}  = (0, -(|E| + 1))$$
In addition to that, we introduce a transmitter neuron $T$ that kick-starts the wave propagation:
$$T = (1, 0, 1)$$
And connect this transmitter neuron to the neurons that code for the sink edges according to:
$$S_{T, H_{*\rightarrow t}} = (1,1)$$
Where $t$ is the sink vertex.
The transmitter neuron will have $V_0 = 1$ at the start of the algorithm and will spike exactly once.
\\
\\
We define a second readout network with neurons:
$$R_i = (1 + (|E| + 1), 0, 1)$$
Where each neuron codes for a specific edge in the original flow network. The connection topology of this network is defined as follows: if $H_{s\rightarrow a}$ codes for a source edge in the first network we connect it to the second network according to
$$S_{H_{s\rightarrow a}, R_{s\rightarrow a}}  = (1,1)$$
In addition to that, we define the connectivity between neurons $R$ in the following way. let $a \rightarrow b$ and $c \rightarrow d$ be two edges in the original flow network with vertices $a, b, c$ and $d$. If $b = c$, we define a synaptic connection:
$$S_{R_{a\rightarrow b}, R_{c\rightarrow d}} = (1,1)$$
This means that the direction of flow from the flow network is preserved. This will enable us to read out the path in the correct direction.
We also connect the earlier defined capacity neurons according to:
$$S_{C_i,R_i}  = (0,-(|E| + 1))$$
This makes sure that the readout neuron will not spike if the capacity of the edge it codes for is exhausted.
Each spike event in this network will be written on the output tape of $O$. Note that in the second network, neurons will only fire if any of the neurons that code for the source edges in the first network fire. If this is not the case, it means that there is no path from source to sink left and we need to wait for $2\times|E| + 1$ time steps (the longest possible wave through both networks) in order to determine that we are done. The first half of the network thus guarantees that there is indeed a path from source to sink, and the second part of the network sorts edges in such a way that we can reliably decode the path from the network.
The neural algorithm will run until any of the readout neurons that code for a sink edge has spiked. If that is the case, it means that we found a path from the source to the sink. Otherwise, the algorithm will run for $2\times|E| + 1$ steps and terminate, which means that all the augmenting paths in the network have been exhausted.
\\
\\
In figure \ref{mapping} you can find an illustration of how a flow network is mapped on the described SNN topology.
\begin{figure}[H]
    \centering
    {\includegraphics[width = 2in]{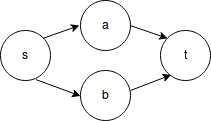}}
    {\includegraphics[width = 3in]{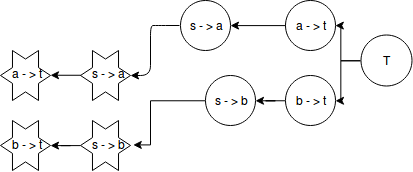}}
    \caption{\textit{Illustration of how a flow network is mapped on the SNN topology described in the text. The left graph depicts a simple flow network with a source $s$ and a sink $t$ and two intermediate nodes. On the right, you can see the SNN topology of the flow network. The first half of the SNN consists of standard neurons that compute the shortest path from the sink to the source. The second half consists of readout neurons that will reverse and sort the paths such that we can read out the path in the correct order. For clarity, we left out the capacity neurons that can inhibit the readout and standard neurons when the capacity of an edge is exhausted.}}
    \label{mapping}
\end{figure}
In lines \ref{whilePathBegin} - \ref{whilePathEnd} we read out the neurons and write them on the neuromorphic oracle as a path network. We check whether the neuron is a good continuation of the path, i.e. if we have two neurons $R_{a\rightarrow b}, R_{c\rightarrow d}$ we have a continuation if $b = c$. In addition to that, we keep track of the minimum capacity of the neuron we found in order to determine the minimum capacity edge of the path we found. We can trivially read out the path by letting the neurons spike and read out their spike events from the output tape of $O$.
\\
\\
In lines \ref{updateFlowBegin} -  \ref{updateFLowEnd} we read out the path data and update the capacity neurons based on the flow of the minimum capacity edge we found in lines \ref{whilePathBegin} - \ref{whilePathEnd}. We then write this neuron back as a capacity readout neuron on the neuromorphic oracle.
\\
\\
In lines \ref{maxflowbegin} - \ref{maxflowend} we read out the voltages from the capacity neurons in order to determine the final flow through each edge. We then sum them up to determine the maximum flow through the network and return that value in line \ref{return}.

\section{Computational Complexity Analysis}\label{complexity}
In this section, we discuss the complexity analysis of $A_L$ and $\mathcal{S}_{L,*}$. We strictly divide resources between both processors, meaning that if a computation is only counted towards the complexity of the processor if the computation happens on that processor. We will assume that the communication resources count towards the complexity of $A_L$. We also assume that a read or write operation takes  $\mathcal{O}(1)$ time and  $\mathcal{O}(1)$ space. And finally, we assume that a spike exerts $\mathcal{O}(1)$ energy.
\subsection{Complexity of $A_L$}
\subsubsection{$A_L$ runs in $\mathcal{O}(n^3)$ Time}
We show that $A_L$ has time complexity $\mathcal{O}(n^3)$, where $n$ denotes the number of edges in the network. In line \ref{cap}, we write capacity neurons on $O$. Given that we have $|E|$ edges, this part is linear in the number of edges.  In line \ref{search}, we create a spiking network that can determine the shortest augmenting path in the flow network. In order to create this network, we need to read the edges from the input tape of $M$, create a neuron that codes for this edge and communicate it to $O$. Next, we need to connect each neuron to their neighbouring edges and their respective capacity neurons. We can achieve that by reading out each neuron in the spiking network, read out the respective capacity neurons and read out the search neurons and communicate a connection to $O$ if the identifiers of the neurons match. For each neuron, you need to check at most $2\times|E|$ neurons and matching the identifiers only takes $\mathcal{O}(1)$ time, which means that this procedure takes $\mathcal{O}(n^2)$ time. In lines \ref{whilePathBegin} - \ref{whilePathEnd} we read out the neurons that spiked in the spiking network and build up a path. Every operation within the while statement takes $\mathcal{O}(1)$ time. Since there are $|E|$ readout neurons the complexity of the entire loop will be $\mathcal{O}(n)$ time. Similarly in lines \ref{updateFlowBegin} - \ref{updateFLowEnd} we only read out neurons that code for the path, which has size at most $|E|$, which means that we also only need  $\mathcal{O}(n)$ time. Computing the maximum flow in lines \ref{maxflowbegin} - \ref{maxflowend} then also only takes $\mathcal{O}(n)$ time.
The outer loop depends on the number of augmenting paths. Since in each iteration, one edge will be saturated, there are $\mathcal{O}(|E|)$ possible paths, which is still polynomial in the input. Given that dominating time complexity within the loop is $\mathcal{O}(n^2)$, the entire procedure runs in $\mathcal{O}(n^3)$ time.
\subsubsection{$A_L$ runs in $\mathcal{O}(1)$ Space}
We show that the algorithm described in section \ref{algorithm} only uses $\mathcal{O}(1)$ space for the preprocessor, besides the encoding of the input. In section \ref{algorithm}, we identified three bottlenecks of E-K algorithm. We will address how all three bottlenecks are resolved. Since the search algorithm is entirely written off to the neuromorphic oracle, there is no in-memory maintenance of any sort of queue so we only use $\mathcal{O}(1)$ memory. Likewise, we do not fully maintain the path in memory but use the property that the neurons are sorted according to their spike timing. We then read every neuron and determine on the basis of their ID's whether it is a correct continuation and we can at the same time record the capacity encoded by the neuron in order to determine the minimum flow. Hence we only need to allocate $\mathcal{O}(1)$space. Likewise, since we code the flow into the threshold of the capacity neurons, we do not need to maintain it in the WM of $M$, since we can read and communicate flow information from and to the oracle $O$, which means that we only need to allocate $\mathcal{O}(1)$ space. This means that we can run this algorithm using only $\mathcal{O}(1)$ memory on the preprocessor and therefore $A_L$ runs in $\mathcal{O}(1)$ space.
\subsection{Complexity of $\mathcal{S}_{L,*}$}
\subsubsection{$\mathcal{S}_{L,*}$ runs in $\mathcal{O}(n)$ Time}
The time complexity of $\mathcal{S}_{L,*}$ is dominated by the search procedure since retrieving information from the path and capacity neurons can be trivially solved by letting them spike in $\mathcal{O}(1)$ time. The worst case that can occur consists of one path (i.e. a chain of nodes). In that case, we need $2\times|E| + 1$ time steps to receive an output from the oracle, which reduces to $\mathcal{O}(n)$ time. Since we have that $|E| = |V| - 1$ when the network is a chain of nodes, we can further refine that to $\mathcal{O}(l(s,t))$  where $l(s,t)$ denotes the path length from source to sink.  
\subsubsection{$\mathcal{S}_{L,*}$ runs in $\mathcal{O}(n)$ Space}
We need to allocate $|E|$ edges for the capacity neurons. For the path neurons, we need to allocate at worst $|E|$ neurons and for the search network, we need to allocate $2\times|E|$ neurons. Which results in $4\times|E|$ neurons in total. Which means that we need $\mathcal{O}(n)$ space.
\subsubsection{$\mathcal{S}_{L,*}$ uses $\mathcal{O}(n)$ Energy}
Each neuron in the search network spikes at most one time. Since we have $3\times|E| + 1$  neurons (including the capacity neurons) in the networks we need $\mathcal{O}(n)$ energy.
\\
\\
From the above description we have that  $R_A = (\mathcal{O}(n^3),\mathcal{O}(1))$ and $R_S = (\mathcal{O}(n),\mathcal{O}(n),\mathcal{O}(n))$  It therefore must be the case that \NwFlow\ is in $\mathsf{L}^{\mathcal{S}(\mathsf{L},(\mathcal{O}(n),\mathcal{O}(n),\mathcal{O}(n)))}$. Since the time complexity of the search query depends on the longest path between source and sink we can further refine this result to $\mathsf{L}^{\mathcal{S}(\mathsf{L},(\mathcal{O}(l(s,t)),\mathcal{O}(n),\mathcal{O}(n)))}$ where $l(s,t) \leq n-1$ denotes the maximum path length between source and sink.
\section{Empirical Analysis}\label{methods}
In this section, we will describe our empirical methodology. The goal of our empirical investigation is to validate our theoretical complexity results in actual neuromorphic hardware. In this way, we will be able to analyse whether our formal complexity results translate to practical reality and if our proposed theoretical machine model can be mapped onto neuromorphic hardware. This analysis consists of two phases. In the first phase, we will validate our complexity results in a neural simulator we have developed. This phase will serve as a sanity check prior to the hardware implementation and is primarily aimed at testing the algorithm we defined in section \ref{algorithm}. In the second phase, we will implement our algorithm on the Loihi chip. Our aim in this phase is to determine whether (1) our energy analysis holds on actual neuromorphic hardware, (2) the communication line between conventional processor and neuromorphic coprocessor introduces significant additional overhead that is not captured in our machine model, (3) it is practical to implement our algorithm under the proposed machine model and to map out the compromises we have to make in order to end up with a workable solution.
\subsection{Software Validation}\label{softwareval}
We will first describe our methodology w.r.t software validation. We implemented a variant of the proposed algorithm in a neural simulator we have developed for the purposes of this project. We look at the part of the algorithm that dominates resources constraints. This will be the search algorithm. We will compare the BFS algorithm on a conventional processor against the search algorithm implemented under our machine model in terms of time and energy resources. For our machine model, we will split this into two parts, the part of the algorithm that runs on the conventional processor and the part of the algorithm that runs on the neuromorphic co-processor. On the neuromorphic co-processor, we measure the time, space and energy demands of the spiking network.
\\
We compute two measures of interest after running each spiking network. The first measure is the number of total spikes in the network, which serves as a proxy for the energy demands.The second measure is the number of time steps needed until the sink neuron spikes in order to see how the time demands of the spiking networks increase with the number of nodes.  We also look at the absolute difference of the flow computed by the spiking network and a reference maximum flow computed with the Edmonds-Karp algorithm \cite{edmonds1972theoretical} in order to validate the accuracy of the solution. For the part of the algorithm that runs on the conventional processor, we need to take into account the resources that it takes to write the network onto the tape. As the size of the network does not change over execution time, we can abstract away and estimate the resources in terms of time.\\
In order to estimate energy demands on the conventional processor, we will assume that the energy demands are bounded by the time complexity on the conventional processor by a constant. This approximation is based under a different (but equivalent) machine model in which each operation can be decomposed into a set of primitive manipulations on the register \cite{cook1973time}. We will use this same method for the search algorithm that is fully implemented in the conventional processor.\\
We will compare two types of graphs. Graphs that have a low degree of connectivity relative to their number of nodes (i.e. sparse graphs) and graphs that have high connectivity relative to their number of nodes. This allows us to untangle in which instances the hybrid model might outperform the conventional method.\\
Since we are interested in how these measures grow when the size of the network increases we compute these measures over a series of networks with an increasing number of nodes. For each size, we randomly generate flow networks according to the following procedure\footnote{All non-proprietary software developed for this project can be found \href{https://github.com/AbdullahiAli/SpikingFlowNet}{\textbf{here}}.}.
\begin{enumerate}
    \item Specify the number of nodes $n$ and the number of edges $m$ in the network
    \item generate $n$ nodes, for each node identifier $i$ (a natural number): connect $i$ to all nodes
    with identifier $i+1$ or higher.
    \item The above procedure will yield a fully connected acyclic component. If this component contains more than $m$ edges: pick a random edge and delete it from the network. If the remaining network is not a fully connected component, put the removed edge back in the network. Repeat this procedure until you end up with a network of $m$ edges.
    \item Specify a maximum capacity $c_{max}$ and randomly assign a capacity $[1, c_{max}]$ to each edge.
\end{enumerate}
We will then run our algorithm on the generated networks, compute the aforementioned measures and average them out over these networks in order to obtain an average estimate for each network size. The maximum capacity and connection density is fixed between network sizes in order to obtain comparative results.
\subsection{Software Results}\label{softwareres}
There was no divergence between the conventional E-K algorithm implementation and the spiking version of the E-K algorithm, indicating that the spiking version works correctly. In figure \ref{softwarebench} you find the results of the software simulation.

\begin{figure}[H]
    \centering
    {\includegraphics[width = 4in]{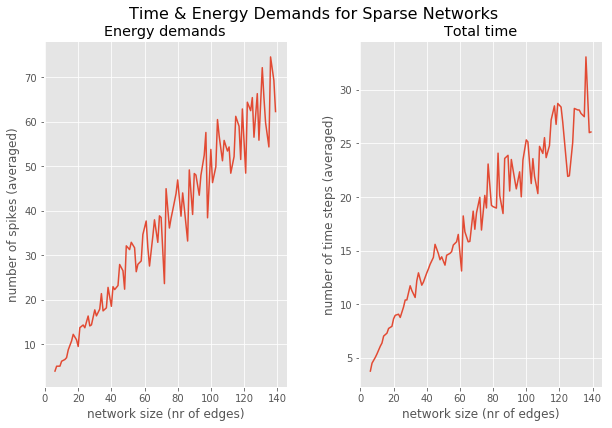}}
    {\includegraphics[width = 4in]{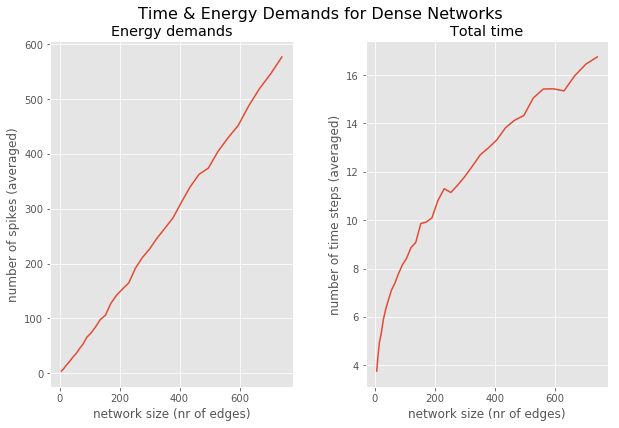}}
    \caption{\textit{Average growth of energy expenditure and time complexity of randomised flow graphs as a function of the number of edges under the hybrid model. We tested two types of graphs. Sparse graphs, i.e. $|E| = 1.4|V|$ and dense graphs, i.e. $|E| = \frac{|V|(|V| - 1)}{2}$. For sparse graphs, we generated graphs with 5 - 100 nodes and for the dense graphs, we generated graphs with 5-40. Each point in each graph is an average over the mean number of time steps and spikes over the search queries. }}
    \label{softwarebench}
\end{figure}
Since there is no obvious first-order proxy for energy consumption on a conventional model at the algorithmic level, we will first lay out our assumptions in this comparison. We assume that resources spent in moving data, i.e. from conventional processor to neuromorphic coprocessor in the hybrid model, and from RAM to CPU in the conventional model will be roughly proportional to each other and this will be left out of this analysis. We will also assume that the energy complexity of the conventional model will be proportional to the time that it takes to execute a certain computation. That is, if a certain computation takes $k$ time steps, the energy expended will be counted as $ck$ where $c$ is certain nonnegative constant. As already mentioned earlier we will use the BFS algorithm \cite{moore1959shortest} as a benchmark which has worst-case time complexity
$$R_{BFS}[TIME] = \mathcal{O}(|V| + |E|)$$

Under our aforementioned assumptions this will lead to the following energy expenditure:
$$R_{BFS}[ENERGY] = c|V| + c|E|$$
In figure \ref{softwarebench} we observe that for sparse networks, energy grows as $\frac{|E|}{2}$. If we assume $c > 0$ this means that for sparse networks we might predict an improvement in energy efficiency since $\frac{|E|}{2} < c|E| + c|V|$, but hardware validation needs to confirm this observation. In dense networks, we see a strict linear growth in terms of energy. Since we have that $|E| = \frac{|V|(|V| - 1)}{2}$. We end up with the inequality:
$$c|V| + c\frac{|V|(|V| - 1)}{2} > \frac{|V|(|V| - 1)}{2}$$
Which will be satisfied if $c > 0$, which means that if the energy expenditure is a multiple of the number of time steps, we can predict that we would also see improvements in terms of energy efficiency for dense nets. \\ \\
In terms of time complexity we can see in figure \ref{softwarebench} that both networks type show growth of $log(|E|)$, which is strictly more efficient than the conventional model. This indicates strong evidence for an improvement in terms of time complexity in the hybrid model.

\subsection{Hardware Validation}
In section \ref{softwareval} we concluded that there are potential efficiency gains in time and energy in the hybrid model. In order to arrive to this conclusion we made two assumptions: (1) the communication overhead between the conventional processor and neuromorphic processor is negligible, (2) one operation on the neuromorphic processor is proportional to one unit of energy and therefore one operation on the neuromorphic processor will be more efficient than an operation on the conventional processor. In this section, we will attempt to verify these two assumptions. \\
\\
We implemented the described algorithm in section \ref{algorithm}  on the Loihi Nahuku board. In order to meet the restrictions of the API of the Loihi processor we only implemented the search algorithm on the neuromorphic cores. Since the search algorithm is the major resource consumer in this algorithm we do not expect this revision to have major effects on the obtained results. We benchmarked the Nahuku board on two different levels of detail, the macro -and micro level. On the macro-level, we look at the entire system (neuromorphic chip and auxiliary systems) and measured network set-up, compilation and execution time. The set-up and compilation time gives us an estimate of how much resources the communication line between the conventional -and neuromorphic processor consumes and the runtime will give us an estimate of the execution time on the Loihi chip. On the micro-level, we zoom in and look at energy and execution performance on the neuromorphic chip. This will give us a more detailed estimate of how the energy and runtime demands evolve for larger networks. We repeated the experiments in section \ref{softwareres}  under the same conditions.

\subsection{Macro-level Results}
In figure \ref{macro} you can find the results for the macro-level benchmarks. We tested sparse and dense connected network as formulated in section \ref{softwareres}, each data point is an average of 10 randomly sampled flow networks.\\ For sparse networks, we see an initial high offset in execution time followed by slow growth and a negligible cost in communication overhead (setup time and compile time).\\
For dense networks, however, we see a higher cost in runtime. This can be explained by the fact that the number of edges grows much faster in dense networks compared to sparse networks. Note that in our original algorithm we could stop the execution on the basis of a signal of the neuron (spike of a source neuron). This is not possible in the Loihi API so we had to upper bound our execution time to the worst possible outcome (each neuron spikes before we have a solution). This, in particular, deteriorates the runtime results for dense networks. For dense networks, we again see a negligible cost in communication overhead.
\begin{figure}[H]
    \centering
    {\includegraphics[width = 3in]{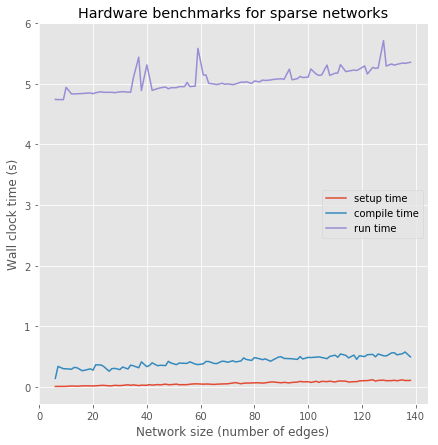}}
    {\includegraphics[width = 3in]{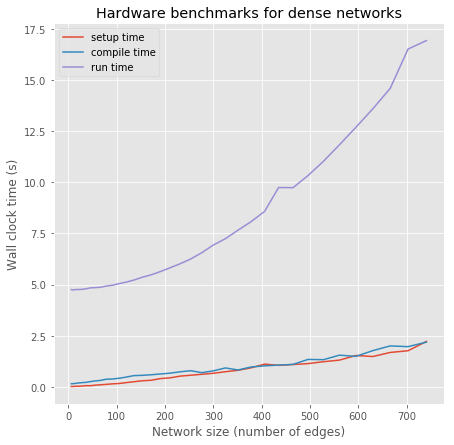}}
    \caption{\textit{Macro-level benchmark of Loihi system. We benchmarked the set-up time of the network (mapping flow net to a NxNet definition), compilation-time of the network and the execution time. On the x-axis, we have the number of edges in the original flow network and on the y-axis the wall clock time in seconds. For both network types, we see a constant/fairly slow growth in runtime indicating that the search algorithm scales very well in terms of actual simulation on the neuromorphic chip. In terms of set-up and compilation we see that both network types show slow scale-up as predicted by our previous simulations.}}
    \label{macro}
\end{figure}

\begin{figure}[H]
    \centering
    \includegraphics[width=0.8\textwidth]{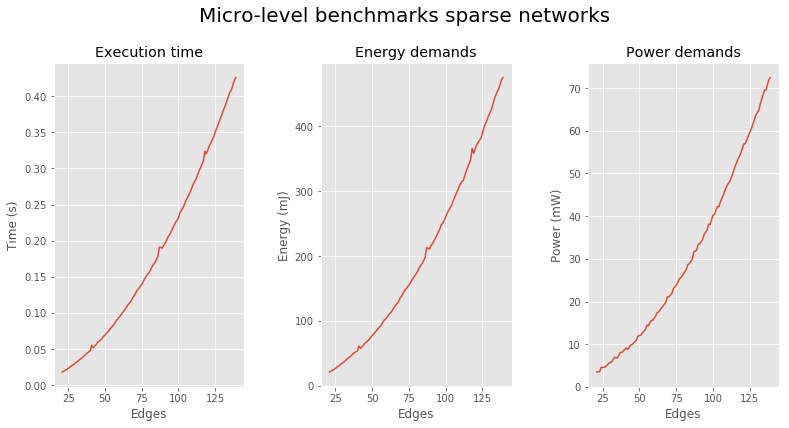}
    \caption{\textit{Micro-level benchmarks on the Loihi chip. Sparse networks varying from 15 nodes up to 100 nodes were benchmarked. We measured the execution time, energy demands and power demands. Each data point consists of 10 randomly sampled flow networks. On the x-axis are the number of edges in the network and on the y-axis are the magnitudes of interest. Execution time is measured in seconds, energy in milli-Joules and power in milliwatts.} }
    \label{micro_sparse}
\end{figure}

These results demonstrate that communication overhead is negligible, verifying our first assumption. In the case of sparse networks, we even see an improvement in runtime results.

\subsection{Micro-level Resuts}
We benchmarked sparse networks varying from 15 nodes up to 100 nodes and dense networks varying from 15 to 40 nodes. We measured the execution time, energy demands and power demands. Each data point consists of 10 randomly sampled flow networks and in figure \ref{micro_sparse} and \ref{micro_dense} you can see the results obtained from these benchmarks.\\
There is a slightly super-linear relationship between the network sizes and the measured statistics, but note that the execution times and power and energy scale in exactly the same way. These results fall in line with our second assumption and show that running the search query on a neuromorphic chip will be more energy-efficient, under the assumption that an operation on a neuromorphic chip will consume less energy.

\begin{figure}[H]
    \centering
    \includegraphics[width=0.8\textwidth]{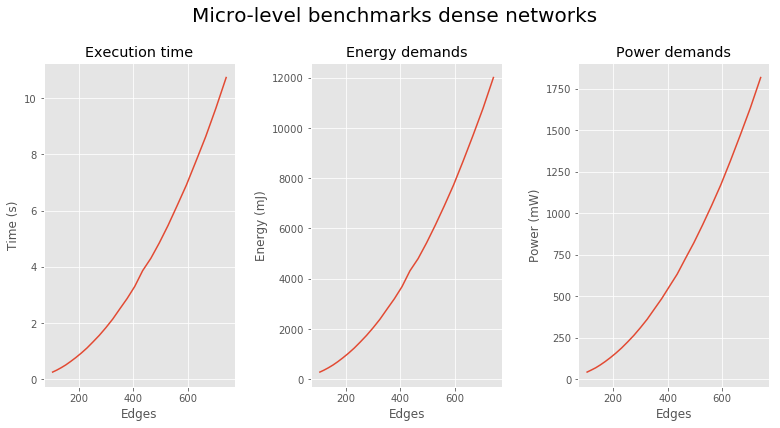}
    \caption{\textit{Micro-level benchmarks on the Loihi chip. Dense networks varying from 15 nodes up to 40 nodes were benchmarked. We measured the execution time, energy demands and power demands. Each data point consists of 10 randomly sampled flow networks. On the x-axis are the number of edges in the network and on the y-axis are the magnitudes of interest. Execution time is measured in seconds, energy in milli-Joules and power in milliwatts.}}
    \label{micro_dense}
\end{figure}

\section{Discussion}\label{discuss}
We have demonstrated a potential pipeline from theory to practical implementation in order to systematically investigate potential application areas for neuromorphic processors. Below we will discuss the advantages and limitations of this pipeline.

\subsection{What can we learn from theory?}
From classical complexity \cite{goldschlager1982maximum}, we learned that the \NwFlow\ problem is hard to implement on contemporary neuromorphic processors due to an inherently serial component in the problem. In sections \ref{model} to \ref{complexity} we demonstrated a theoretical approach in which we introduced a new machine model, with a new lattice of complexity classes, in which we unfold a new source of complexity: energy. On the basis of this model, we proposed an algorithm that shows that we can satisfy the logspace constraint. Something that was not possible under the classical Turing model.\\
In addition to that, under the assumption that one operation takes one unit of energy and communication overhead is negligible, we unveiled that we can get potential efficiency gains through off-loading parts of the algorithm on a neuromorphic co-processor.\\
While some of the theoretical results did not hold when moving to practice, this at least shows in what way such a theoretical approach could be beneficial. It helps us understand what aspects of a problem make a problem hard or easy to implement in neuromorphic hardware. Moreover, it gives us the ability to generalize this understanding to a larger class of problems (in this case the class of $\mathsf{P}$-complete problems). And finally, it allows us to come up with an alternative way to solve this problem and gives us pointers to potential efficiency gains we can get in terms of time, space and energy. Especially the last point is hard to arrive at without a fundamental understanding of the hardness of a certain problem. Theory can, therefore, provide a sound and rigorous basis on
which we can motivate why a certain application area is suited for neuromorphic solutions and help us come up with novel ways to solve a certain problem based on our understanding of the sources of complexity in the problem.

\subsection{From theory to practice: what do we sacrifice and what do we learn?}
In section \ref{methods} we continued to build on our theoretical results and implemented the algorithm we described in section \ref{algorithm}. We split this part up in two phases: a software phase and a hardware phase. The software phase served as an initial sanity check. It helped us spot potential problems in the proposed algorithm and understand in what way we have to modify our algorithm to make it work on neuromorphic hardware. We notably only offloaded the search part on the neuromorphic processor and did not include the maintenance of the flow and the maintenance of the path. This violates the logspace constraint, but through the results in section \ref{softwarebench} we were able to predict potential efficiency gains in time and energy. \\ \\
An important comment to make is that these predictions only held under two assumptions: (1) negligible communication overhead, (2) a linear relationship between runtime and energy. In the hardware phase, we verified these assumptions and ran a modified algorithm on the Loihi platform. We split these analyses up in a macro and micro part. In the macro part, we showed that in the full system (i.e. Loihi chip + network setup, compilation and sending and handling jobs) the biggest source of complexity is the runtime. In both networks, we found that the communication overhead was negligible. This indicates that our first assumption holds. In the case of sparse networks, we even found that the runtime grows very slow w.r.t. the network size indicating potential runtime efficiency gains. Important to note is that the overall runtime has an initial high cost but subsequently grows rather slow indicating that a hybrid approach is only cost-efficient if the networks are somewhat of large scale. \\
In the micro part, we zoomed in on the Loihi chip and looked at how the execution time, energy demands and power demands grow as the network sizes grow. In figures \ref{micro_sparse} and \ref{micro_dense} we see that there is a roughly linear relationship with time, energy and power, this means that our earlier prediction that time would scale roughly logarithmically with network size does not hold (see figure \ref{softwarebench}, but it does confirm that energy demands roughly scale linearly with network size. This means that our assumption that one operation is roughly proportional to one unit of energy holds. Since the Loihi processor is much more energy efficient than conventional processors \cite{davies2018loihi}, we have strong evidence that off-loading the search query to a neuromorphic processor yields efficiency gains while not sacrificing runtime complexity for relatively large scale networks.
\\
\\
The above discussion shows that software validation phase is a good complement to actual hardware benchmarking. It helps you understand limitations in your algorithm and it helps you pinpoint under what circumstances you might see efficiency gains. This in term helps in interpreting subsequent results obtained in hardware. We, therefore, see value in explicitly incorporating a software validation phase in the pipeline.

\subsection{Future work: A more comprehensive complexity theory and new computational problems}
From our results, several pointers of future research arise. We need a more comprehensive neuromorphic complexity theory. That includes hardness proofs, a notion of completeness, complexity classes and a means to reduce problems to other problems while preserving essential properties of the problem (e.g. time and energy) and potentially models that unfold different type of sources of complexity. Work is already done in this direction (e.g. see \cite{kwisthout2018neuromorphic}), and this work would enhance our fundamental understanding of what makes a problem efficiently solvable on a neuromorphic processor and would greatly help us in mapping the space of potential applications for neuromorphic hardware.\\
In addition to that, we need to investigate new computational problems, that can lead to new algorithm design patterns such as the hybrid approach we proposed for the maximum flow problem. This, in turn, could enhance the programming tools available to neural algorithm designers.

\section{Conclusion}\label{concl}
In this project, we described a pipeline from complexity theory to practical implementation in order to systematically explore the application space of neuromorphic processors. We picked the maximum flow problem \cite{ford1955simple}, a more complex algorithm than previous algorithms studied in the neural algorithm design field \cite{verzi2017optimization,amione2018nonneural}.\\
By introducing a hybrid computational model, we were able to show that \textsc{Max Network Flow} is in $\mathsf{L}^{\mathcal{S}((\mathcal{O}(n^c),\mathcal{O}(\log n)),(\mathcal{O}(n),\mathcal{O}(n),\mathcal{O}(n)))}$. This means that we have found a link between $\mathsf{P}$-complete problems and a machine model in which we potentially could reduce the space requirements of these problems.\\
Through our practical analyses, we were able to confirm that there are potential efficiency gains by off-loading the search procedure to a neuromorphic processor, while not sacrificing runtime complexity. Additionally, the practical investigation also showed what comprises were needed in order to implement this algorithm on neuromorphic hardware. Most notably, the logspace constraint was violated.\\
This shows that theory and practice should ideally be tightly interlinked. Theoretical analyses help us understand why certain problems can or cannot be efficiently implemented in neuromorphic hardware, and can help us in coming up with novel ways of solving problems. Practical investigations then help us refine our algorithm and/or theoretical model. Ideally when employing this pipeline, one should iterate back and forth from theory to practice.\\
Future endeavours would include, a more comprehensive neuromorphic complexity theory that would better allow us to map out the application space of neuromorphic hardware systems and new neural algorithm design patterns that could help us tackle problems in novel ways.

\section*{Acknowledgements}

This work is partially based on the MSc in Artificial Intelligence thesis of the first author. We are grateful to Nils Donselaar and Iris van Rooij for valuable comments and feedback on earlier versions of this paper. This project was supported by financial and practical support from Intel Corporation via the Intel Neuromorphic Research Community.

\appendix
\section{Appendix}

We claimed that every spiking neural network $\mathcal{S}$ with $n$ neurons, time constraint $t$, and energy constraint $e \leq nt$ can be reduced using a linear reduction to an instance of \TNwFlow\ with $\mathcal{O}(nt)$ nodes. We hereby make the following assumptions about constraints on the behavior of $\mathcal{S}$ and adaptations to the LIF model introduced above for technical reasons in the proof below:

\begin{enumerate}
\item $\mathcal{S}$ is constructable with a Turing machine in time, independent of the input size (i.e., in constant time);
\item $\mathcal{S}$ contains exactly one constant (always firing) input $C$ and no neurons have additional biases;
\item All weights, delays, and thresholds are non-negative integers;
\item All leakages constants are set to $1$;
\item Instead of setting neuron potentials to their reset value when a neuron fires, the potential will be set to the `overflow' after firing, i.e., the threshold value is subtracted from the potential and the reset value is the remainder;
\item $N_{\mathrm{rej}}$ fires until $N_{\mathrm{acc}}$ fires and then remains silent.
\end{enumerate}

In our construction we use several `$C$-gadgets' that ensure that a specific amount $f_i$ of flow can pass through a source-sink pair $<s_i, t_i>$ if and only if a specific global constraint $C$ on the behaviour of $\mathcal{S}$ is met. In addition we make `local' constructs that mimic the dynamics of potentials and spike transmission inside and in-between neurons. Finally we combine these parts and show that a flow $f$ can go from the `master-source' to the `master-sink' if and only if $\mathcal{S}$ accepts, i.e., $N_{\mathrm{acc}}$ fires before time $t$ taking energy at most $e$. We build up our construction of the flow network $G$ as follows. We designate neurons $N_{\mathrm{acc}}$, $N_{\mathrm{rej}}$, and $N_{\mathrm{con}}$ as acceptance, rejection, and constant neuron, respectively.

\begin{enumerate}
\item[1.] For the constant input $N_{\mathrm{con}} \in N$ we include $t$ vertices $n_{\mathrm{con},k}$ in $G$. For the acceptance and rejection neuron we similarly include $t$ vertices $n_{\mathrm{acc},k}$ and $t$ vertices $n_{\mathrm{rej},k}$. We introduce one auxilliary source $p_{\mathrm{con}}$ and a vertex $n^{\mathrm{src}}_{\mathrm{con}}$, connect $p_{\mathrm{con}}$ to $n^{\mathrm{src}}_{\mathrm{con}}$ with capacity $[t, t]$ and connect $n^{\mathrm{src}}_{\mathrm{con}}$ to all $t$ vertices $n_{\mathrm{con},k}$ with capacity $[1,1]$. This construct enforces that either all constant inputs fire, or none of them fires. 
\item[2.]For every neuron $N_i \in N \setminus \{N_{\mathrm{con}}, N_{\mathrm{acc}}, N_{\mathrm{rej}}\}$, we include $t$ vertices $n_{i,k}$ in $G$. We introduce an arc between every vertex $n_{j,k}$ and $n_{j,k+1}$ for $1 \leq k < t$, with capacity $c(n_{j,k},n_{j,k+1}) = [0, T_{n_j} - 1]$. 
\item[3.] For every {\em outgoing} synapse $s_{a,b} = (d,w) \in S$ and for all time points $1 \leq k + d < t$ we introduce an synapse-gadget (later to be defined) $s_{a,k}$ between $n_{a,k}$ and the receiving nodes $n_{b,k+d}$.
\end{enumerate}

Figure \ref{general_structure} shows the thus constructed general structure of the flow graph after these steps.

\begin{figure}[h!]
\centering
\includegraphics[width=9.8cm]{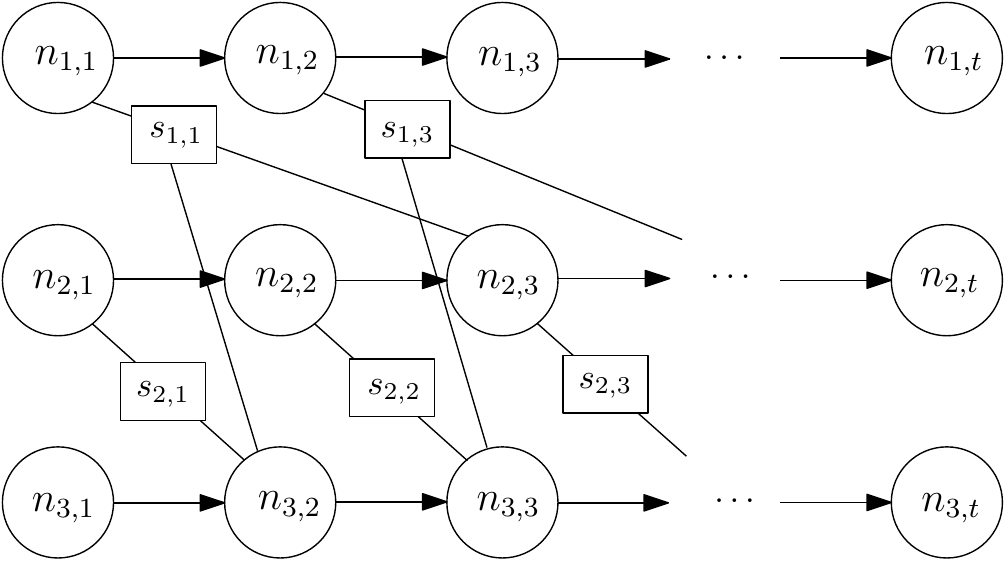}
\caption{The general structure of $G$ after the first two steps exemplified for the neurons $n_1, n_2, n_3$ and the synapses $s_{1,2} = (2, w_{1,2})$, $s_{1,3} = (1, w_{1,2})$ and $s_{2,3} = (1, w_{2,3})$.}
\label{general_structure}
\rule{\columnwidth}{0.3mm}
\vspace{1mm}
\end{figure}

\begin{enumerate}
\item[4.] We introduce an energy gadget $E$ as follows. We introduce vertices $j_m, 1 \leq m \leq j$, with a source $s_e$ and a sink $t_e$, and define the capacity between $s_e$ and $j_1$ and between $j_e$ and $t_e$ to be $[0,1]$, and between $j_m$ and $j_{m+1} (m < e)$ to be $[0,e]$. Furthermore, we introduce an auxilliary sink $r_e$ and connect $j_e$ to $r_e$ with capacity $[0,t-1]$. There will be an arc from every synapse-gadget $s_{a,b,k}$ to $j_k$ with capacity $[0,1]$ (figure \ref{energy_gadget}). The intuition here is that we can push flow $1$ from $s_e$ to $t_e$ if and only if at most $e-1$ neurons fire over the course of the simulation.
\end{enumerate}

\begin{figure}[h!]
\centering
\includegraphics[width=10cm]{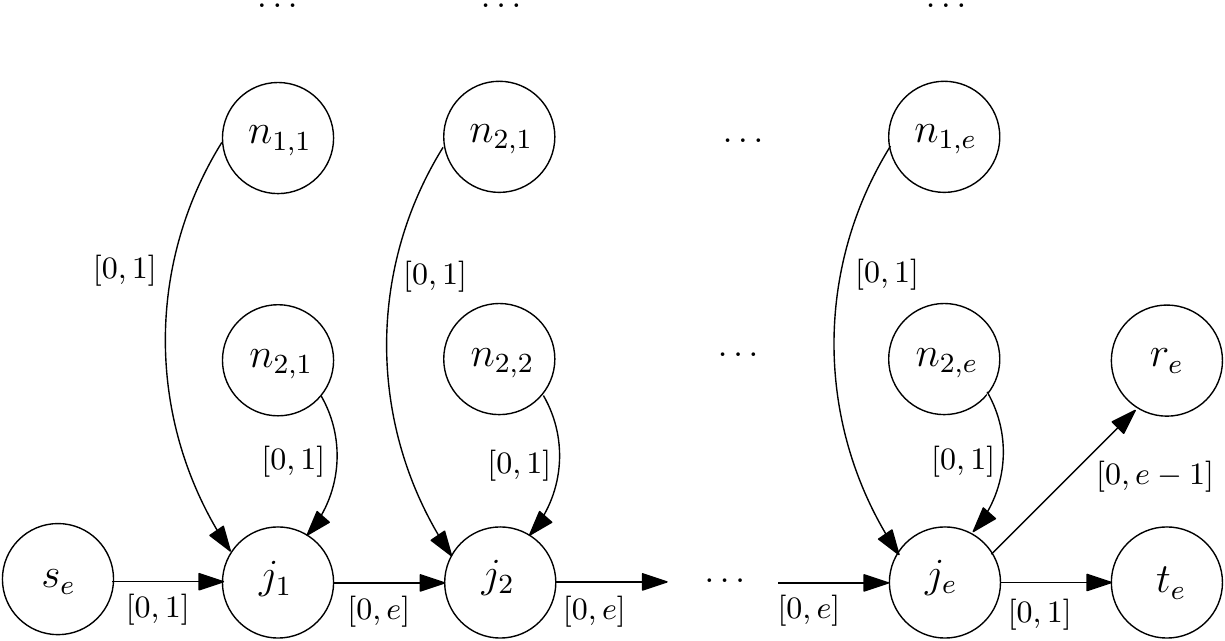}
\caption{Energy gadget $E$ ensuring that a flow of $1$ can be pushed from $s_e$ to $t_e$ if and only if the energy constraint is satisfied.}
\label{energy_gadget}
\rule{\columnwidth}{0.3mm}
\vspace{1mm}
\end{figure}

\begin{enumerate}
\item[5.] We introduce an time gadget $T$ as follows. We introduce vertices $h_l, 1 \leq l \leq t$, with a source $s_t$ and a sink $t_t$, and define the capacity between $s_t$ and $h_1$ and between $h_t$ and $t_t$ to be $[0,1]$, and between $h_l$ and $h_{l+1} (l < t)$ to be $[0,t]$. Furthermore, we introduce an auxilliary sink $r_t$ and connect $h_t$ to $r_t$ with capacity $[0,t-1]$. There will be an arc from the vertex representing the rejection state at any point in time $N_{\mathrm{rej},k}$ to $t_k$ with capacity $[0,1]$ (figure \ref{time_gadget}). The intuition here is that we can push flow $1$ from $s_t$ to $t_t$ if and only if $N_{\mathrm{rej}}$ stops firing after at most $t-1$ time steps in the simulation.
\end{enumerate}

\begin{figure}[h!]
\centering
\includegraphics[width=10cm]{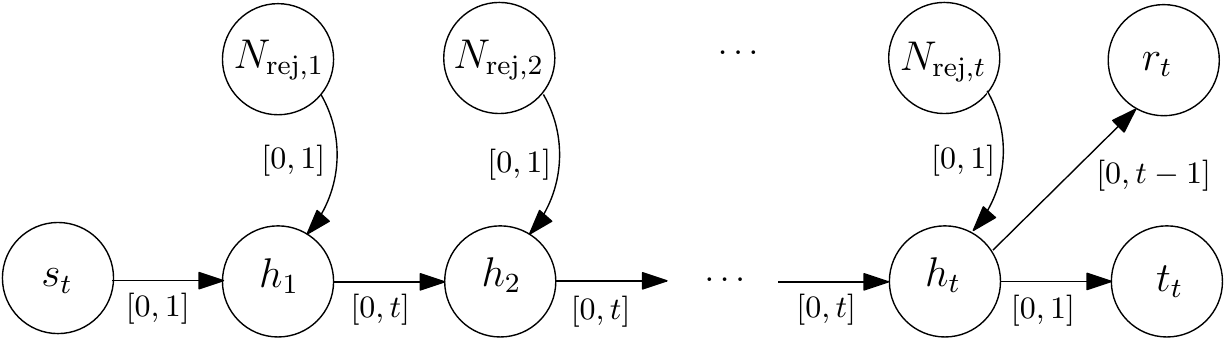}
\caption{Time gadget $T$ ensuring that a flow of $1$ can be pushed from $s_t$ to $t_t$ if and only if the time constraint is satisfied.}
\label{time_gadget}
\rule{\columnwidth}{0.3mm}
\vspace{1mm}
\end{figure}

\begin{enumerate}
\item[6.] We define the previously introduced synapse-gadgets $G_a$ as follows. We connect $n_{a,k}$ to $n^{\mathrm{out}}_{a,k}$ with capacity $[T_a,T_a]$. This ensures that the incoming flow at $n^{\mathrm{out}}_{a,k}$ is exactly $T_a$ if $N$ fires and $0$ otherwise. We connect $n^{\mathrm{out}}_{a,k}$ to $e_k$ with capacity $[1,1]$ (simulating the energy expenditure of one unit). We connect $n^{\mathrm{out}}_{a,k}$ to $n^{\mathrm{ass}}_{a,k}$ with capacity $[\mathrm{out}_a,\mathrm{out}_a]$, $\mathrm{out}_a = \sum_{b: s(a,b) \in S} w(a,b)$, and connect $n^{\mathrm{ass}}_{a,k}$ with the post-synaptic neurons $n_{b,k+d}$ with capacity $[w(a,b),w(a,b)]$. Let $\mathrm{res} = T_a + 1 - \sum_{b: s(a,b) \in S} w(a,b)$ be the difference between the threshold potential of $a$ and the sum of its weighted outgoing connections plus $1$. We distinguish between the three cases where $\mathrm{res} < 0$, $\mathrm{res} > 0$, and $\mathrm{res} = 0$.
\begin{itemize}
\item[$<$] We introduce an auxiliary source $p_{a,k}$ and connect $p_{a,k}$ to $n^{\mathrm{out}}_{a,k}$ with capacity $[-\mathrm{res},-\mathrm{res}]$; we have that $\mathrm{out}_a = T_a - \mathrm{res} - 1$ (Figure \ref{synapse_gadget}).
\item[$>$] We introduce an auxiliary sink $r_{a,k}$ and connect $n^{\mathrm{out}}_{a,k}$ to $r_{a,k}$ to with capacity $[\mathrm{res},\mathrm{res}]$; we have that $\mathrm{out}_a = T_a + \mathrm{res} - 1$.
\item[$=$] We do not introduce additional nodes and have that $\mathrm{out}_a = T_a - 1$.
\end{itemize}
\end{enumerate}

\begin{figure}[h!]
\centering
\includegraphics[width=9.8cm]{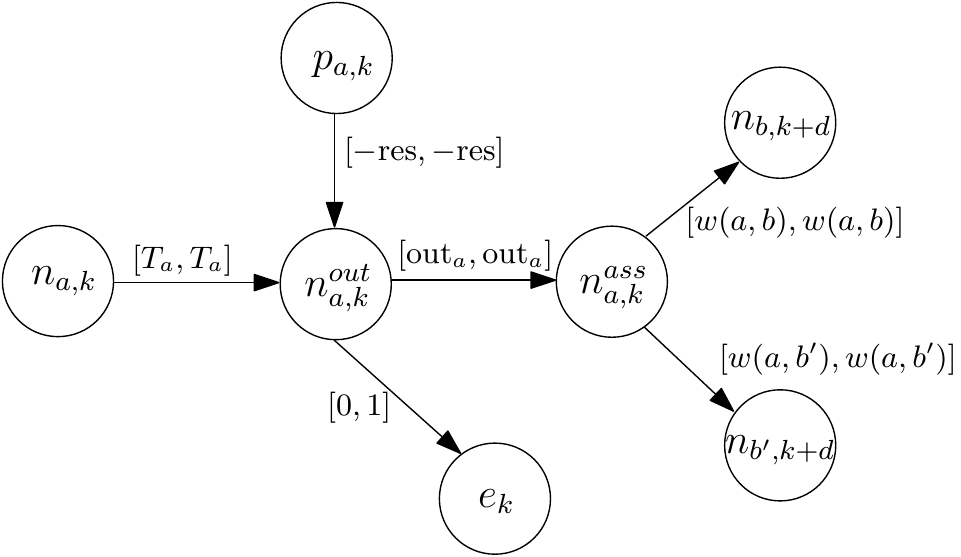}
\caption{Synapse gadget $G_a$ distributing flow over the post-synaptic neurons. Any excess flow above the potential $T_a$ is directed to an auxilliary sink $R_{T_a,k}$. This specific graph describes the case where $\mathrm{res} < 0$, that is, the weighted outgoing spikes exceed the threshold potential.}
\label{synapse_gadget}
\rule{\columnwidth}{0.3mm}
\vspace{1mm}
\end{figure}

\begin{enumerate}
\item[7.] We introduce a failure gadget $F$ as follows. We introduce vertices $f_l, 1 \leq l \leq t$, with a source $s_f$ and a sink $t_f$, connect $s_f$ to $f_1$, $f_l$ to $f_{l+1} (l < t)$ and $f_l$ to $t_f$. We connect every vertex $n_{i,k}$ to $f_k$ and define the capacity between each vertex in this gadget to be $[0,1]$. The intuition here is that we can push flow $1$ from $s_f$ to $t_f$ if and only if there is no flow at all from any node $n_{i,k}$ to $f_k$. In this way we will enforce desired behaviour when a threshold in reached at a neuron $a$: it {\em must} send its total flow $T_a$ to $n^{\mathrm{out}}_{a,k}$ since otherwise it is forced to send one unit of flow to the failure gadget.
\end{enumerate}

\begin{enumerate}
\item[8.] Finally, we introduce a `master' source $s$ and sink $t$ and connect $s$ to $s_t$, $s_f$, and $s_e$ (with capacity $[0,1]$ for both) and connect $t_t$, $t_f$, and $t_e$ to $t$, again both with capacity $[0,1]$.
\end{enumerate}

Given this construct, we claim that we can send a flow $f = 3$ from $s$ to $t$ if and only if the spiking neural network accepts within time and energy limits $t$ and $e$. We are now ready to formally prove the following theorem:

\begin{theorem}
\TNwFlow\ is \\$\mathcal{S}(\mathcal{O}(n),\mathcal{O}(n)),(\mathcal{O}(1),\mathcal{O}(n),\mathcal{O}(n))$-hard.
\end{theorem}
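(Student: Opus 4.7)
The plan is to complete the reduction sketched in steps 1--8 above: for any spiking neural network $\mathcal{S}$ satisfying assumptions (1)--(6), I will show that the flow network $G$ just constructed, together with the threshold choice $d := 2$, is a yes-instance of \TNwFlow\ if and only if $N_{\mathrm{acc}}$ fires in $\mathcal{S}$ within time $t$ using at most $e$ spikes. Since the construction is linear in $|\mathcal{S}|$ and is performed by a linear-time Turing machine, this establishes the claimed hardness: in particular for $t = \mathcal{O}(1)$, any computation of a neuromorphic oracle on $n$ neurons with $\mathcal{O}(n)$ space and $\mathcal{O}(n)$ energy is captured by a \TNwFlow\ instance on $\mathcal{O}(n)$ nodes.

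First I would verify the size bound and the backbone structure. Each of the $n$ neurons contributes $t$ column vertices, each outgoing synapse contributes one constant-size synapse gadget per admissible time step, and the time, energy and failure gadgets each contribute $\mathcal{O}(t)$ vertices with $\mathcal{O}(nt)$ incoming arcs, so $|V(G)|, |A(G)| = \mathcal{O}(nt)$. I would then verify that the three backbones $s \to s_t \to \cdots \to t_t \to t$, $s \to s_f \to \cdots \to t_f \to t$ and $s \to s_e \to \cdots \to t_e \to t$ are the only channels from $s$ to $t$ and that each has bottleneck capacity exactly $1$, so that the outflow at $s$ exceeds $d = 2$ if and only if all three backbones are simultaneously saturated by one unit.

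Next I would establish the forward direction. Given an accepting trace of $\mathcal{S}$, I construct a flow $f^{*}$ by pushing $T_a$ units through the synapse gadget $G_a$ at exactly those times $k$ at which $N_a$ fires, routing the extra reservoir flow through $p_{a,k}$ or $r_{a,k}$ to absorb the discrepancy between $T_a$ and $\mathrm{out}_a$, and sending the residual sub-threshold potential along the vertical arcs $n_{a,k} \to n_{a,k+1}$. Assumptions (3)--(5) guarantee that all flow values are non-negative integers and that the overflow-reset rule yields a well-defined residual strictly below $T_a$, so every capacity interval $\{0\}\cup[c_{\mathrm{min}},c_{\mathrm{max}}]$ is honoured. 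The time, energy and failure backbones are then saturated by routing their single units of flow through time steps at which, respectively, $N_{\mathrm{rej}}$ is silent, no spike is emitted, and every column has been fully drained by its synapse gadget; the trace hypotheses on $t$ and $e$ together with assumption (6) guarantee that such indices exist.

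Finally I would establish the backward direction, which I expect to be the main obstacle. Here I have to argue that every feasible flow of value at least $3$ in $G$ encodes a legal execution of $\mathcal{S}$, not merely an arbitrary set of firings. The argument proceeds gadget-by-gadget. Saturation of the failure backbone forces that at every time step $k$ no column $n_{i,k}$ leaks flow into $f_k$, which together with the threshold interval $\{0\}\cup[c_{\mathrm{min}},c_{\mathrm{max}}]$ on $n_{i,k}\to n^{\mathrm{out}}_{i,k}$ forces the all-or-nothing firing behaviour at the synapse gadget; the vertical cap $T_{n_i}-1$ simultaneously prevents a sub-threshold potential from masquerading as a spike. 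A careful induction on $k$ then shows that the flow values along the vertical chain of neuron $i$ equal its membrane potential at time $k$ in the unique execution of $\mathcal{S}$ compatible with the chosen firings, with flow conservation at each column vertex mirroring the discrete update rule $V_{t+1,y} = V_{t,y} + \sum w_{xy}x_{t+1-d_{xy}}$ modulo the overflow subtraction from assumption (5). Saturation of the energy and time backbones then enforces the global bounds of at most $e-1$ spikes and at most $t-1$ active steps of $N_{\mathrm{rej}}$, which by assumption (6) is equivalent to $N_{\mathrm{acc}}$ firing within budget, completing the equivalence.
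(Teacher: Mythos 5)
Your plan is correct and takes essentially the same route as the paper's own proof: it uses the identical gadget construction from steps 1--8, establishes the same two-directional equivalence between pushing flow $3$ from $s$ to $t$ (i.e., choosing $d=2$) and an accepting run of $\mathcal{S}$ within the time bound $t$ and energy bound $e$, and invokes the same linear size accounting (constantly many vertices per neuron per time step) to obtain the claimed hardness. The only difference is one of explicitness --- you spell out the choice of $d$ and a column-by-column induction for the backward direction, which the paper compresses into its enumerated list of structural properties of the construction.
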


\begin{proof}
To prove hardness we must show that every constant-time-bounded computation of a spiking neural network $\mathcal{S}$ with $\mathcal{O}(n)$ neurons, satisfying the constraints addressed above, can be encoded by an instance $(G, d)$ (with $\mathcal{O}(n)$ nodes) of \TNwFlow\ such that $(G, d)$ is a yes-instance of \TNwFlow\ if and only if $\mathcal{S}$ accepts. Let $c_t$ be a constant bound on the runtime of $\mathcal{S}$, enforced by its clock, and let $c_e$ be a linear bound on the energy usage of $\mathcal{S}$, enforced by its meter. Observe that the construction of $(G, d)$ from $\mathcal{S}$ ensures the following behaviour:
\begin{enumerate}
\item All constant inputs must fire for $\mathcal{S}$ to accept, otherwise no threshold at all will be reached. That is, we may assume that the amount of flow entering the network (apart from the auxilliary sinks and sources in the synapse gadgets whose purpose is to keep the flow locally invariant) is fixed.
\item The flow going from $n_{a,k}$ to $n_{a,k+1}$ is exactly the increase in network potential of $N_a$ between time $k$ and time $k+1$;
\item There can be a flow of exactly $T_a$ going from $n_{a,k}$ to $n^{\mathrm{out}}_{a,k}$ if and only if the threshold $T_a$ is reached at time $k$, allowing $N_a$ to fire at time $k$;
\item If the threshold $T_a$ is reached at time $k$, $a$ {\em must} fire, that is, send a flow of size $T_a$ to $n^{\mathrm{out}}_{a,k}$, otherwise by construction it will send one unit of flow to the failure gadget;
\item Let $N_b$ be a post-synaptic neuron with respect to $N_a$ in $\mathcal{S}$ and let $d$ and $w$ denote its delay and weight. The flow from $n^{\mathrm{out}}_{a,k}$ to $n_{b,k+d}$ is exactly $w(a,b)$ if $N_a$ fires at time $k$, and zero otherwise;
\item We can push a flow of $1$ from $s$, via $s_e$ and $t_e$, to $t$ if and only if this channel is not `saturated' by more than $e-1$ neurons `firing';
\item We can push a flow of $1$ from $s$, via $s_t$ and $t_t$, to $t$ if and only if this channel is not `saturated' by $n_{\mathrm{rej}}$ `firing' until at least time $t-1$.
\end{enumerate}
Now, assume that $\mathcal{S}$ accepts using at most energy $e$ and fires before time $t$. Then, the construction above ensures that there is a path of flow in the nodes $n$ in the flow network such that $n_{\mathrm{rej}}$ will stop firing in time; it also ensures that there will be at most $e-1$ units of flow entering the channel from $s_e$ to $t_e$, and that no flow enters the failure gadget, that is, we can push a flow $f = 3$ from $s$ to $t$. If $\mathcal{S}$ does not accept (or not in time, or with too much energy) either the time or energy gadget will be saturated. Any flow that violates the firing characteristics of the neurons will cause the failure gadget to be saturated. We conclude that $(G, d)$ is a yes-instance of \TNwFlow\ if and only if $\mathcal{S}$ accepts; note that only a constant number of vertices is introduced in $G$ for every neuron in $\mathcal{S}$.
\end{proof}

\bibliographystyle{unsrt}  
\bibliography{snn_network_flow}

\end{document}